\documentclass[12p]{article}

\usepackage[letterpaper,
            left=1.7in,
            right=1.7in,
            top=1.2in,
            bottom=1.2in]{geometry}

\usepackage{caption}
\usepackage{placeins}

\title{
Manifold-Guided Motion Planning 
\\for Tight Assemblies
}
\author{Dror Livnat$^{\dag}$ \and Michael M. Bilevich$^{\dag}$ \and Michal Kleinbort$^{\dag}$ \and Dan Halperin$^{\dag}$}
\date{}

\usepackage{amssymb}
\usepackage{amsmath}
\usepackage{amsfonts}

\usepackage{amsthm}
\usepackage{listings}
\usepackage{hyperref}
\usepackage{graphicx}
\usepackage{wrapfig}
\usepackage{xcolor}
\usepackage{mathtools}
\usepackage{algorithm}
\usepackage{algpseudocode}
\usepackage{textcomp}
\usepackage{cite}
\usepackage{booktabs}

\usepackage{tikz}
\usetikzlibrary{shapes.geometric, arrows.meta, positioning, fit, backgrounds}

\newtheorem{theorem}{Theorem}[section]

\newtheorem{lemma}[theorem]{Lemma}
\usepackage{graphicx}
\usepackage{algorithm}
\usepackage{xcolor}
\usepackage{comment}
\usepackage{xspace}
\usepackage{mathtools}
\usepackage{hyperref}
\usepackage{threeparttable}

\usepackage{booktabs}
\usepackage{siunitx}
\usepackage{subfig}
\usepackage{diagbox}
\newcommand{\cmg}{\texttt{CMG-RRT}\xspace}
\newcommand{\rrt}{\texttt{RRT}\xspace}
\newcommand{\trrrt}{\texttt{TR-RRT}\xspace}
\newcommand{\prm}{\texttt{PRM}\xspace}

\newcommand{\cspace}{\ensuremath \mathcal{C}}
\newcommand{\free}{\ensuremath \mathcal{F}_{\delta}}
\newcommand{\B}{\ensuremath \mathcal{B}^{\cspace}}

\newcommand{\SE}{\mathrm{SE}}
\newcommand{\SO}{\mathrm{SO}}

\newcommand{\bbP}{\mathbb{P}}
\newcommand{\bbR}{\mathbb{R}}
\newcommand{\bbS}{\mathbb{S}}

\newcommand{\calU}{\mathcal{U}}

\newcommand{\qstart}{q_\mathrm{start}}
\newcommand{\qgoal}{q_\mathrm{goal}}
\newcommand{\qnear}{q_\mathrm{near}}
\newcommand{\qnew}{q_\mathrm{new}}
\newcommand{\fsample}{\textsc{Sample}}

\newcommand{\fextend}{\textsc{Extend}}

\newcommand{\frefine}{\textsc{Refine}}
\newcommand{\fupdate}{\textsc{Update}}

\newcommand{\fdirection}{\textsc{Direction}}
\newcommand{\frandomrotate}{\textsc{RandomRotate}}

\newcommand{\Texpand}{T_{\mathrm{expand}}}
\newcommand{\Trefine}{T_{\mathrm{refine}}}
\newcommand{\Tsearch}{T_{\mathrm{search}}}
\newcommand{\factorref}{\mathrm{factor}_{\mathrm{refine}}}
\newcommand{\LIMIT}{\mathrm{LIMIT}}

\newcommand{\distc}{d_{\scriptscriptstyle{\cspace}}}

\newcommand{\drthree}{d_{\scriptscriptstyle{\bbR^3}}}
\newcommand{\dsone}{d_{\scriptscriptstyle{\bbS^1}}}
\newcommand{\dsonei}{d_{\scriptscriptstyle{\bbS^1[i]}}}

\newcommand{\robotsdf}{F_{M_2\to M_1}}

\newcommand{\Nrot}{N_\mathrm{rot}}

\begin{document}

\maketitle
\begingroup
\renewcommand\thefootnote{\fnsymbol{footnote}}
\footnotetext[2]{Blavatnik School of Computer Science and Artificial Intelligence, Tel-Aviv University, Israel. This work has been supported in part by the Israel Science
Foundation (grant no~3598/25), 
by the Blavatnik Computer Science Research Fund, 
and by the Shlomo Shmelzer Institute for 
Smart Transportation at Tel Aviv University.}
\endgroup

\begin{abstract}
    Motion planning for rigid-body assembly poses a fundamental challenge in robotics due to tight geometric constraints. In such scenarios, feasible motions often require passing through (near-)zero
    clearance
    configurations in which the parts are tightly constrained by contact. In this work, we introduce Critical-Manifold Guided RRT (CMG-RRT), a sampling-based planner designed specifically for tight assembly problems.
    Our key observation is that in tight assemblies, valid solution paths lie on or near a critical manifold: the subset of configuration space consisting of poses with at least one contact point between parts.
    CMG-RRT guides exploration by adaptively biasing sampling toward neighborhoods of the critical manifold using a hierarchical subdivision of the configuration space. We prove that CMG-RRT is probabilistically complete under standard clearance assumptions. Empirical evaluation on challenging rotational assembly benchmarks demonstrates a $100\%$ success rate across all tested instances, including, to the best of our knowledge, the first fully automatic solution of the \emph{Elk} disentanglement puzzle. Our open source software is available through our project page: \url{https://www.cgl.cs.tau.ac.il/projects/tight-assembly-planning}
\end{abstract}

\begin{figure}[ht]
    \centering
    \includegraphics[width=1.0\linewidth]{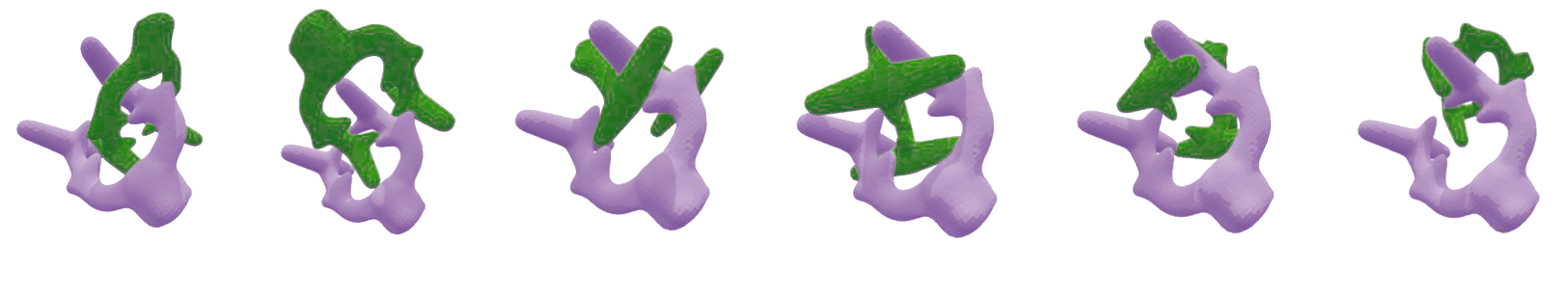}
    \caption{Snapshots of the \emph{Elk} disentanglement puzzle, previously unsolved (to the best of our knowledge), showing a complete solution generated by our algorithm. From left to right: the start configuration, traversal through four narrow C-space tunnels interleaved with wide free-space regions, and the goal configuration.
    }
    \label{fig:elk-time-series}
\end{figure}

\section{Introduction}
\label{sec:introduction}
Sampling-based motion planning is a fundamental paradigm in robotics, used in applications such as autonomous navigation, manipulation, assembly, and numerous others~\cite{lavalle2006planning_book}. 
Methods such as Probabilistic Roadmaps (PRM) and Rapidly-Exploring Random Trees (RRT) have become standard due to their scalability to high-dimensional configuration spaces and their probabilistic completeness guarantees~\cite{kavraki1996probabilistic_prm,lavalle1998rapidly,lavalle2001rapidly,xu2024recent}.
When feasible motions have sufficient clearance from obstacles,
uniform random sampling in configuration space enables these planners to efficiently discover collision-free paths.

A longstanding difficulty arises in \emph{tight} motion-planning problems, where any valid solution must pass through regions of extremely small measure in configuration space, commonly referred to as \emph{narrow passages} or \emph{tunnels}~\cite{hsu1999path,kavraki1998analysis,DBLP:journals/tase/SalzmanHH15}.
Such situations are common in robotic assembly, particularly for rigid parts that require tightly coupled translation and rotation.
In these settings, uniform sampling becomes highly inefficient: the overwhelming majority of samples lie in regions irrelevant to the solution, while the critical regions that enable progress are sampled with vanishing probability.
This failure mode is a canonical limitation of sampling-based planners and is often illustrated through ``bug-trap'' examples~\cite{lavalle2006planning_book}.

To address narrow passages, a variety of nonuniform sampling strategies have been proposed, including bridge tests, obstacle-based sampling, spatial subdivisions,and Gaussian or medial-axis inspired heuristics~\cite{hsu2003bridge_test,sun2005narrow_passage_sampling,yan2013path,zhang2007hybrid}.
While these methods can improve performance on certain instances, they typically rely on geometric heuristics that are unreliable in high-dimensional spaces and offer no general guarantees for complex, contact-rich motions. 
The Soft Subdivision Search (SSS) framework for motion planning~\cite{yap2013soft,yap2015soft,zhang2024theory} has guarantees, but applying it to sufficiently complex assemblies or robots seems (as of now) a prohibitively hard task.

Many tight assemblies may require sliding or rolling along obstacle boundaries, rather than maintaining clearance throughout the motion.
This observation has motivated work on compliant and contact-aware motion planning~\cite{friedman1996compliant,lefebvre2005active_compliant_motion,de1988compliant}, as well as physics- and simulation-based planners that can exploit contact and near-contact interactions to solve tight disentanglement tasks~\cite{tian2022assemble_them_all}. Feature-driven tunnel discovery methods similarly leverage local geometric structure to guide search toward narrow passages without assuming continuous contact~\cite{zhang2020cspace_tunnel_discovery}. 

While these approaches can be effective for problems dominated by tight interactions or exhibiting specific structural cues, they are less suited to mixed settings in which wide free-space motion is interleaved with short but critical tight transitions whose locations are not known in advance.

Recent work has explored learning-based and data-driven techniques to guide sampling toward narrow passages or critical connectivity regions~\cite{lee2022adaptive_experience_sampling,li2023sample_driven_connectivity_learning} 
Another influential line of work studies motion planning on \emph{constraint manifolds}.
Projection-based planners such as CBiRRT~\cite{berenson2009cbirrt_manipulation} and atlas-based methods, including AtlasRRT and its variants~\cite{jaillet2017atlas_rrt,voss2017atlas_plus_x}, explore lower-dimensional manifolds defined implicitly by kinematic or task constraints.
Comprehensive surveys~\cite{kingston2019exploring_implicit_spaces,kingston2018sampling_constrained_survey} have formalized this perspective and shown its effectiveness for problems with fixed, known constraints such as closed kinematic chains.
In contrast, contact constraints in assembly planning are neither fixed nor known a priori: the set of active contacts changes along the motion, and the corresponding manifolds appear and disappear as the robot moves.

In prior work, we introduced TouchRoll-RRT (\trrrt)~\cite{livnat2024tight},
a sampling-based planner designed for mixed wide-tight settings.
\trrrt combines standard RRT exploration in free space with a contact-aware extension procedure guided by a signed distance function (SDF).
When the search reaches the vicinity of obstacles, \trrrt maintains multiple contact points and advances by projecting steering directions onto the tangent space of a local \emph{contact critical manifold}, followed by a retraction step.
This reduces the effective degrees
of freedom from six to five, four, and at times even three or two, and enables traversal of narrow passages that outperforms 
standard RRT.

\begin{figure}[h]
    \centering   \includegraphics[width=1.0\linewidth]{figures/assembly_grid.pdf}
    \caption{Dataset provided by Tian~et~al.~\cite{tian2022assemble_them_all}. Excerpts from the more challenging \emph{Rotational} benchmarks:  \texttt{Puzzles} in the first row and \texttt{Others} in the second row.}
    \label{fig:dataset}
\end{figure}

However, \trrrt retains a fundamental mismatch between sampling and exploration.
While its extension step explicitly follows the critical manifold, its sampling step remains uniform in the six-dimensional space $\SE(3)$. 
As a result, only samples that happen to induce extensions near the manifold benefit from the contact-aware machinery.
Many samples are generated in regions that do not contribute to progress towards the goal 
causing the tree to repeatedly grow into misleading dead ends---an effect similar to bug-trap behavior~\cite{lavalle2006planning_book}.
This sampling bias limits performance and prevents \trrrt from solving particularly challenging instances.

In this work, we present \emph{Critical-Manifold Guided RRT} (\cmg), which resolves this limitation by explicitly biasing sampling toward the critical manifolds. We consider the configuration space $\cspace = \SE(3)$, the six-dimensional manifold of rigid body transformations in $\bbR^3$.
We maintain an adaptive subdivision of the configuration space into axis-aligned boxes and discard boxes that are provably far from the contact manifold,
using an SDF-based distance oracle.
Sampling is restricted to the remaining boxes, whose union is contained in an increasingly small $D$-neighborhood of the critical manifold as the box diagonal $D$ decreases. 
In addition, we occasionally apply random rotations to the entire system, an idea already in use for other purposes in the field, such as efficiently finding pairs of nearest neighbors~\cite{aiger2014random,kleinbort2015efficient}. In this work it helps avoiding gimbal locks~\cite{hemingway2018perspectives,lavalle2006planning_book}, as well as enables the completeness proof. 

Search and refinement are interleaved adaptively, so that easy problems incur little overhead, while harder problems trigger additional refinement that sharpens the sampling distribution.

As we prove below, this novel manifold-guided sampling scheme allows \cmg to be \emph{probabilistically complete}; that is, under standard assumptions of positive clearance, the probability that \cmg finds a solution approaches one as the number of iterations grows.
Moreover, \cmg accelerates planning on problems already solvable by \trrrt and, crucially, solves significantly harder instances.
A notable example is the \emph{Elk} disentanglement puzzle\footnote{A commercially available cast-metal disentanglement puzzle designed by Nobuyuki Yoshigahara and marketed by Hanayama of Japan.}, previously identified as particularly challenging due to the absence of exploitable geometric features and the presence of multiple dead-end tunnels~\cite{livnat2024tight,zhang2020cspace_tunnel_discovery}.
Using \cmg, we demonstrate---%
to the best of our knowledge---the first fully automatic solution of this puzzle using a general-purpose motion planner without handcrafted features.

\smallskip 

\noindent
\textbf{Our Contribution}
Our contributions are as follows: (i)~We introduce \cmg, a sampling-based planner that adaptively concentrates samples near contact critical-manifolds. (ii)~We extend previous probabilistic completeness proofs to an $\SE(3)$ parametrization with non-Euclidean topology, and show how choices in our algorithm directly benefit the completeness proof. (iii)~We present experimental results, demonstrating the effectiveness of our method over prior methods. (iv)~Our software is open-source and publicly available online.~\footnote{Project page: \url{www.cgl.cs.tau.ac.il/projects/tight-assembly-planning}}

In particular, we show that our method solves challenging assembly problems, including the Elk puzzle. In~\cite{zhang2020cspace_tunnel_discovery}, the authors mention the Elk puzzle as a tough challenge that their algorithm fails to solve. In~\cite{livnat2024tight}, we tried applying TR-RRT to the Elk puzzle, and although it was able to go through individual c-space tunnels, it failed to solve the entire puzzle due to the c-space uniform sampling. This is the first time that an algorithm solving the Elk puzzle is presented, as far as we know.

\smallskip 

\noindent
\textbf{Organization}
The remainder of the paper is organized as follows.
Section~\ref{sec:preliminaries} reviews the background and formalizes the problem.
Section~\ref{sec:CMG-RRT} presents the \cmg algorithm.
Section~\ref{sec:completeness} provides a probabilistic completeness proof.
Experimental results are reported in Section~\ref{sec:results}, followed by discussion and future directions in Section~\ref{sec:discussion}.

\section{Preliminaries and Problem Statement}
\label{sec:preliminaries}

\subsection{The \texorpdfstring{$\SO(3)$}{SO(3)} and \texorpdfstring{$\SE(3)$}{SE(3)} Groups and the Configuration Space}
\label{ssec:so3}

When dealing with rigid body transformations, there are two groups that naturally arise~\cite{hashim2019special,selig2013geometrical}---the special orthogonal group $\SO(3)$ and the special Euclidean group $\SE(3)$. Both are Lie groups and are extensively researched in mathematics, physics, and robotics. 

As previously mentioned, in this work, the configuration space~\cite{lavalle2006planning_book} 
is $\cspace = \SE(3)$, as we solve for the assembly of a part among other static
parts.

We note that $\SO(3)$ is a three-dimensional manifold. However, there are many common methods of representing orientations, and each has its advantages and shortcomings. Examples are unit quaternions, angle axis, and Euler angles, to name a few~\cite{diebel2006representing}. In this work, we use only the Euler angle representation, and in particular, the \emph{rpy} (roll-pitch-yaw) representation.

Recall the notion of the Haar measure on a group~\cite{ecker2024haar}, which is a generalization of the Lebesgue measure to locally compact groups.
The Haar measure is a measure that is invariant under group multiplication and is unique up to a constant scaling factor. For example, the Lebesgue measure on $\bbR^n$ is also a Haar measure when $\bbR^n$ is viewed as an additive group. Since $\SO(3)$ is a \emph{compact} Lie group, its Haar measure can be normalized to a probability measure, providing a notion of uniform sampling over rotations.

We focus on the \emph{rpy} representation, let $\phi,\psi\in [-\pi,\pi)$ be the roll and yaw, respectively, and $\theta\in [-\pi/2,\pi/2]$ be the pitch. Then the volume element of the Haar measure using this representation~\cite{ecker2024haar} 
is
\begin{align}
    \frac{1}{8\pi^2}\cos\theta\ d\phi \ d\theta \ d\psi\;.
\end{align}

Finally, we note that due to this parametrization of $\SO(3)$, we regard our configuration space as $\cspace = \SE(3) \simeq \bbR^3 \times [-\pi,\pi)\times [-\pi/2,\pi/2]\times[-\pi, \pi) \subset \bbR^6$. Hence, in this work, 
we regard points in the Lie group $\SE(3)$ as their six-dimensional (translation + \emph{rpy}) representation.

\subsection{Pseudo-metric Spaces}
\label{sub:pseudometric}

Metric spaces~\cite{linial2003finite,searcoid2007metric} 
arise in many different applications. A metric space $(X,d)$ is some set $X$ equipped with a function $d:X\times X \to \bbR$ called \emph{metric} that satisfies three axioms: (i) (\emph{symmetry}) $d(x,y)=d(y,x)$, (ii) (\emph{positivity}) $d(x,y)\geq 0$ and equality if and only if $x=y$, and (iii) for any $z\in X$, (\emph{triangle inequality}) $d(x,y) \leq d(x,z)+d(z,y)$.

Similarly, we can define a \emph{pseudo-metric} space~\cite{howes1995modern}, by replacing axiom (ii) with a softer requirement: for any $x,y\in X$, $d(x,y)\geq 0$ and $d(x,x) = 0$. Hence, we possibly allow for different points $x\neq y$ to have $d(x,y)=0$.

One example is $\drthree : \bbR^6 \times \bbR^6 \to \bbR$, defined by $\drthree(x,y) = \sqrt{\sum_{i=1}^3(x_i-y_i)^2}$.
Clearly, it is symmetric and has the triangle inequality (since the Euclidean norm on $\bbR^3$ has the triangle inequality). However, if $e_1,\dots,e_6$ are the standard basis of $\bbR^6$, then $\drthree(e_4,e_5)=0$, meaning the $\drthree$ is not a metric, but is a pseudo-metric.

Let us define the \emph{great-circle metric} on $\bbS^1$, which we parametrize as $\bbS^1 \simeq [-\pi, \pi)$. Take $\dsone : \bbS^1\times\bbS^1\to \bbR$, 
\begin{align}
    \dsone(x,y)=\min (|x-y|, 2\pi-|x-y|)\;.
\end{align}

\begin{lemma}
    The function $\dsone$ is a metric on $\bbS^1\simeq [-\pi,\pi)$.
\end{lemma}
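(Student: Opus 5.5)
We verify the three metric axioms for $\dsone(x,y)=\min(|x-y|,2\pi-|x-y|)$ on $[-\pi,\pi)$, using throughout that $|x-y|<2\pi$ for $x,y\in[-\pi,\pi)$.

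Symmetry is immediate, since $\dsone$ depends only on $|x-y|$. For positivity, the bound $|x-y|<2\pi$ gives $0\le |x-y|$ and $2\pi-|x-y|>0$, so $\dsone(x,y)\ge 0$. If $\dsone(x,y)=0$, the minimum must be attained by $|x-y|$, because the other term is strictly positive. Hence $x=y$. The converse $\dsone(x,x)=0$ is clear.

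For the triangle inequality, I will use the characterization
\[
\dsone(x,y)=\min_{k\in\mathbb{Z}}|x-y+2\pi k|.
\]
That is, $\dsone$ is the distance in $\bbR$ between the coset $x+2\pi\mathbb{Z}$ and the point $y$. To justify this, note that $|x-y|<2\pi$, so the only candidates for the minimum are the $k$ with $|k|\le 1$. The term with $k=0$ is $|x-y|$, and the best term with $k=\pm1$ is $2\pi-|x-y|$. Every other $k$ gives a value of at least $2\pi+|x-y|$, or larger.

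Given $x,y,z$, choose integers $k_1,k_2$ attaining $\dsone(x,z)=|x-z+2\pi k_1|$ and $\dsone(z,y)=|z-y+2\pi k_2|$. The triangle inequality in $\bbR$ then gives
\[
|x-y+2\pi(k_1+k_2)|\le \dsone(x,z)+\dsone(z,y).
\]
The left-hand side is at least $\min_k|x-y+2\pi k|=\dsone(x,y)$, which proves the inequality.

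The main obstacle is the identity $\dsone(x,y)=\min_k|x-y+2\pi k|$, in particular ruling out values of $k$ other than $0$ and $\pm1$. This takes one line using $|x-y|<2\pi$. If I wanted to avoid the identity, the alternative would be a direct case analysis on which term attains each minimum. That route is messier, so I will not take it.
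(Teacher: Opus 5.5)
Your proof is correct, and it is considerably more explicit than the paper's. The paper's entire proof is the observation that $\dsone(x,y)$ is the length of the shorter of the two arcs joining the corresponding points of the unit circle; those arcs have lengths $|x-y|$ and $2\pi-|x-y|$. The metric axioms are then inherited from arc-length distance, with the triangle inequality left implicit: concatenating a shortest arc from $x$ to $z$ with one from $z$ to $y$ gives some arc from $x$ to $y$.

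You instead establish the lift formula $\dsone(x,y)=\min_{k\in\mathbb{Z}}|x-y+2\pi k|$, i.e.\ you identify $\dsone$ with the quotient metric on $\bbR/2\pi\mathbb{Z}$. You then derive the triangle inequality from the one in $\bbR$ by adding the optimal shifts. This is the algebraic counterpart of the same picture, since the shifts $k=0,\pm1$ correspond to the two arcs. Your version is self-contained and checks positivity and the triangle inequality explicitly, rather than relying on known facts about arc length. The paper's version buys brevity and geometric intuition.

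One sentence in your justification of the lift formula is wrong as stated. For $|k|\ge 2$, the value $|x-y+2\pi k|$ need not be at least $2\pi+|x-y|$; that bound holds only for the worse of $k=\pm1$. For example, $x=-0.95\pi$, $y=0.95\pi$, $k=2$ gives $|x-y+4\pi|=2.1\pi<3.9\pi=2\pi+|x-y|$. The correct one-line bound comes from the reverse triangle inequality: for $|k|\ge 2$,
\[
|x-y+2\pi k|\ge 4\pi-|x-y|>2\pi>|x-y| .
\]
This still shows that these terms never attain the minimum, so your identity and the rest of the proof stand.
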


\begin{proof}
    Notice that this function is the distance of the shorter arc length (in radians) between $x$ and $y$.
\end{proof}

We can extend the metric $\dsone$ to a pseudo-metric on $\cspace$: choose some index $i\in\{1,\dots,6\}$. Then $\dsonei:\cspace\times\cspace \to \bbR$, defined as $\dsonei(x,y) = \dsone(x_i,y_i)$ is indeed a pseudo-metric on $\cspace$.

Notice the following lemma on pseudo-metrics:

\begin{lemma}
\label{lem:combine-pseudometrics}
    Let $d_1,d_2:X\times X\to\bbR$ be pseudo-metrics. Then $d_3:X\times X\to\bbR$, defined by 
    \begin{align}
        d_3(x,y)\coloneqq 
        \sqrt{
            \left(d_1(x,y)\right)^2+
            \left(d_2(x,y)\right)^2
        }
    \end{align}
    is also a pseudo-metric.
\end{lemma}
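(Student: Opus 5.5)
We check the three pseudo-metric axioms for $d_3$ one at a time. The first two are immediate from the definition; the triangle inequality is the only step with content.

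\emph{Symmetry.} Since $d_1(x,y)=d_1(y,x)$ and $d_2(x,y)=d_2(y,x)$, the expression under the square root is symmetric in $x$ and $y$. Hence $d_3(x,y)=d_3(y,x)$.

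\emph{Nonnegativity and vanishing on the diagonal.} A square root is nonnegative, so $d_3(x,y)\geq 0$. Also $d_3(x,x)=\sqrt{0^2+0^2}=0$, because $d_1(x,x)=d_2(x,x)=0$.

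\emph{Triangle inequality.} The key idea is that $d_3(x,y)$ is the Euclidean norm of the vector $v(x,y)\coloneqq(d_1(x,y),d_2(x,y))\in\bbR^2$. Fix $x,y,z\in X$. By the triangle inequalities for $d_1$ and $d_2$, the vector $v(x,y)$ is dominated coordinatewise by $v(x,z)+v(z,y)$, and all entries involved are nonnegative. The Euclidean norm is monotone under coordinatewise domination of nonnegative vectors, since squaring is increasing on $[0,\infty)$. Therefore
\begin{align}
    d_3(x,y)=\|v(x,y)\|_2\leq \|v(x,z)+v(z,y)\|_2\leq \|v(x,z)\|_2+\|v(z,y)\|_2 = d_3(x,z)+d_3(z,y)\;.
\end{align}
The second inequality is the triangle inequality (Minkowski's inequality) for the Euclidean norm on $\bbR^2$.

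The only point needing care is the monotonicity step. It relies on the nonnegativity of $d_1$ and $d_2$, which holds by the pseudo-metric axioms. Note that no positivity (distinctness) axiom is used anywhere, so the argument works for pseudo-metrics exactly as it would for metrics. By induction, the same argument also combines any finite number of pseudo-metrics, such as $\drthree$ and the $\dsonei$.
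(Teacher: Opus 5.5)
Your proof is correct and takes essentially the paper's route. The paper's one-line proof just invokes Cauchy--Schwarz, and your Minkowski inequality on $\bbR^2$ is exactly the Cauchy--Schwarz consequence $\|a+b\|_2^2\le \|a\|_2^2+\|b\|_2^2+2\|a\|_2\|b\|_2$; your explicit monotonicity step, which uses nonnegativity of $d_1,d_2$ to square the coordinatewise triangle inequalities, supplies a detail the paper leaves implicit.
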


\begin{proof}
    The proof is straightforward by using the Cauchy-Schwarz inequality.
\end{proof}

Finally, we combine $\drthree$ and $\dsone$ into a pseudo-metric on our configuration space $\cspace = \SE(3) \subset\bbR^6$. Define $\distc : \cspace \times \cspace \to \bbR$ as
\begin{align}
    \distc(x,y) =
    \sqrt{
        \left(\drthree(x,y)\right)^2
    + \sum_{i=4}^6 \left(\dsonei(x,y)\right)^2}
\end{align}
\begin{proof}
    Follows immediately by applying Lemma~\ref{lem:combine-pseudometrics} four times.
\end{proof}
 
We choose this $\distc$  to be our distance function as it is very simple to compute, benefits the manifold search method described in Section~\ref{sec:CMG-RRT}, and it is a sufficient choice to show that our proposed method is probabilistically complete.

Using the distance function $\distc$, we can define an open ball of radius $r$ 
centered at a point~$q$,
$\B_r(q)$, as the set of all points whose distance $\distc$ from $q$ is less than~$r$.
Note that such balls are not necessarily path-connected in the natural Euclidean topology. See Figure~\ref{fig:ball} for an example. As a consequence, in the topology induced by $\distc$, balls $\B_r(q)$ are not necessarily convex.
\vspace{-10pt}
\begin{figure}[!t]
    \centering
    \subfloat{\includegraphics[width=0.4\linewidth]{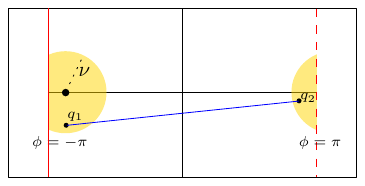}} \hspace{2pt} \subfloat {\includegraphics[width=0.4\linewidth]{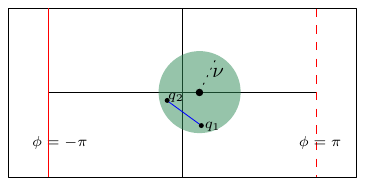}
        
    }\caption{Illustrations of projections of $\B_{\nu}(\cdot)$ balls onto $ [-\pi,\pi)\times \mathbb{R}$. The 
    center of the yellow ball is located in a subset of  $[-{\pi},\pi)\times \mathbb{R}$ such that the ball is not convex. Therefore, there exist points $q_1,q_2$ in the ball such that the straight line segment (in blue) connecting them does not lie in the ball.  
    With high probability (specified in Lemma~\ref{lemma:prob_eucl}), a random rotation will shift the center of the ball to a new position (see the green ball) such that the ball is convex.}
\label{fig:ball}
\end{figure}

\subsection{Signed Distance Functions (SDF)}
\label{ssec:prel-sdf}

Signed distance functions (sometimes also referred to as signed distance fields, or SDFs) are a common approach for representing geometry in computer graphics~\cite{jones20063d}. The geometry is represented as the zero-level set of its SDF.
Formally, assume that we have some compact manifold\footnote{We assume that the manifold is compact and without a boundary, hence, water-tight~\cite{sulzer2024survey}.} $M\subseteq \bbR^3$. In this work, we assume that this manifold $M$ is given as a triangle soup. The signed distance function (SDF) of $M$ is a function $F_M:\bbR^3 \to \bbR$ that maps a point $p\in\bbR^3$ to the distance to its closest point
on the manifold $M$. The distance is signed positive if $p$ is outside $M$, zero if $p\in M$ and negative if $p$ is inside $M$.

A straightforward approach for computing the SDF is to directly query the underlying geometry for the distance. For example, one may compare the query point against all triangles in the triangle soup representation of $M$ to find the nearest distance, and use the parity of the number of intersections of an arbitrary ray with $M$ to determine the sign of the SDF for that query point~\cite{chang2008computing,krayer2019generating}. While there are data structures and algorithms that significantly speed up this query~\cite{dong2018psdf,krayer2019generating,sigg2003signed}, a common approach is to simplify this SDF by evaluating it on a grid, and use that grid as an approximation for the SDF value when querying new points~\cite{tian2022assemble_them_all,zhao2005fast}.
In recent years, there has also been an emergence of deep-learning approaches for approximating the SDF of a given object~\cite{davies2020effectiveness,driess2022learning,jacquet2025neural,li2024representing,park2019deepsdf,shim2023diffusion,yang2025contactsdf}.

One straightforward application of the SDF is as a tool for collision detection. Collision detection is the task of determining whether two (or more) objects overlap~\cite{jimenez20013d}. We can 
represent one object as a collection of points, and evaluate each point in the SDF of the second body~\cite{bender2014continuous,bertiche2021neural,jacquet2025neural,koschier2016hierarchical,macklin2020local,xu20166}. If there are any points with negative distance, then they penetrate the second object and thus a collision occurs. Conversely, if all points have non-negative signed distance, we assume that no collision occurs if the point sample set is sufficiently dense.

\subsection{Problem Statement}

Finally, we formally define the problem we address in this work. 
Assume that $M_1, M_2\subseteq \bbR^3$ are two rigid bodies, which are compact sub-manifolds, with or without a boundary. In this work, we assume that both bodies are scaled down such that they fit in the unit sphere in $\bbR^3$. We assume we have some digital representation of those $M_1$ and $M_2$ for which we can sample random points on the bodies' boundary, 
and for which we can efficiently and effectively evaluate the SDF $F_{M_1}:\bbR^3 \to \bbR$ that is defined in Section~\ref{ssec:prel-sdf}. We assume that $M_1$ is static, and is referred to as the \emph{obstacle(s)}, and $M_2$ can freely translate and rotate in the workspace, and is referred to as the \emph{robot}.
We also define the following function $F_{M_2\to M_1}:\SE(3)\to\bbR$, which is the smallest signed distance of a point on the manifold $M_2$ transformed by a configuration $q\in \SE(3)$ from the static obstacle $M_1$. Formally:

\begin{align}
    F_{M_2\to M_1} (q) = \min_{p\in M_2} F_{M_1} (q\cdot p)\;,
\end{align}
where $q\cdot p$ is the application of a rigid body transformation $q\in \SE(3)$ on a three-dimensional point $p\in \bbR^3$.

The \emph{$\delta$-free space}, denoted by $\free \subseteq \cspace$ is defined as
\begin{align}
    \free \coloneqq \{q \in \cspace \ \vert\ F_{M_2\to M_1}(q) > -\delta \}\; ,
\end{align}
i.e, the set of all configurations for which the robot's penetration into obstacles is at most $\delta$.
We refer to this $\delta$ as the \emph{allowance}, or the allowed penetration.

A \emph{motion-planning problem} is implicitly defined by the triplet $(\free, \qstart, \qgoal)$, with $\qstart, \qgoal \in \free$. A solution to such a problem is a continuous path that moves the robot from the initial configuration $\qstart$ to the goal $\qgoal$ while avoiding collision with obstacles. Formally, a valid path is a continuous\footnote{Note that $\gamma$ is not necessarily continuous under the natural Euclidean topology, but rather in the topology induced by the pseudo-metric $\distc$.} map $\gamma:[0,1]\to\free$, such that $\gamma(0)=\qstart$ and $\gamma(1) = \qgoal$.

\section{Critical-Manifold Guided RRT (\cmg)}
\label{sec:CMG-RRT}

In this section, we introduce our proposed algorithm.
We begin with the motivation in Subsection~\ref{sub:motivation}, identifying the challenges inherent in tight assembly planning and motivating the need for a more principled solution.
We then summarize the \trrrt algorithm~\cite{livnat2024tight}, which addresses these challenges using contact-aware exploration.
Finally, in Subsection~\ref{sub:cmg-rrt}, we present our new method, \emph{Critical-Manifold Guided RRT} (\cmg), and describe its core principles and implementation details.

\subsection{Motivation}
\label{sub:motivation}

The introduction motivates tight assembly planning at a high level. We revisit the motivation here to make explicit the algorithmic bottleneck that \cmg is designed to resolve: in mixed wide-tight settings, the difficulty is not only in generating contact-rich motions once near obstacles, but in \emph{reaching} the relevant near-contact regions efficiently.

Sampling-based planners such as \rrt~\cite{lavalle1998rapidly} and \prm~\cite{kavraki1996probabilistic_prm} perform well when feasible paths have wide clearance. In tight phases, simulation-based~\cite{tian2022assemble_them_all} and contact-aware methods~\cite{lefebvre2005active_compliant_motion} can exploit local geometric constraints. Still, many practical assembly tasks interleave wide-clearance motion with short, critical near-contact transitions whose locations are unknown a priori. Uniform sampling in $\SE(3)$ is therefore inefficient, as most samples fall in regions that do not contribute to progress.

Several methods bias exploration toward narrow passages, e.g., by detecting geometric cues for tunnels~\cite{zhang2020cspace_tunnel_discovery}. While effective on certain families, such cues may be weak or absent in puzzles like the Elk (as mentioned in~\cite{zhang2020cspace_tunnel_discovery}), motivating a bias that depends only on a general distance-to-contact oracle rather than problem-specific features.

Our objective is thus to leverage the structure of the \emph{critical manifold} to concentrate sampling in the small subset of $\SE(3)$ that enables tight transitions, while still allowing efficient exploration of wide regions. This motivates the subdivision-based sampling scheme introduced below.

\subsection{Critical-Manifold Guided RRT (\cmg)}
\label{sub:cmg-rrt}

As a brief reminder, TouchRoll-RRT (\trrrt)~\cite{livnat2024tight} augments a standard \rrt in $\SE(3)$ with a contact-aware extension procedure based on signed distance function (SDF) queries. During tree expansion, \trrrt steers toward a random sample as usual when the motion has clearance; however, when an extension reaches the $\delta$-vicinity of obstacles, it identifies contact points via SDF values and uses the corresponding SDF gradients to maintain sliding/rolling motion along a local contact critical manifold. In effect, this reduces the number of degrees of freedom during tight phases and enables traversal through narrow C-space tunnels that defeat a purely free-space \rrt.

Despite this manifold-aware \emph{extension} step, \trrrt still \emph{samples} uniformly in $\SE(3)$, and therefore benefits from its contact-aware machinery only when random samples happen to induce extensions near the critical manifold. As a result, many samples do not contribute to progress and may repeatedly grow the tree into misleading dead ends. We address this mismatch by modifying the sampling process itself: \cmg adaptively concentrates sampling near the critical manifold using a hierarchical subdivision scheme.

\begin{figure}[h]
    \centering
    \includegraphics[width=1.0\linewidth]{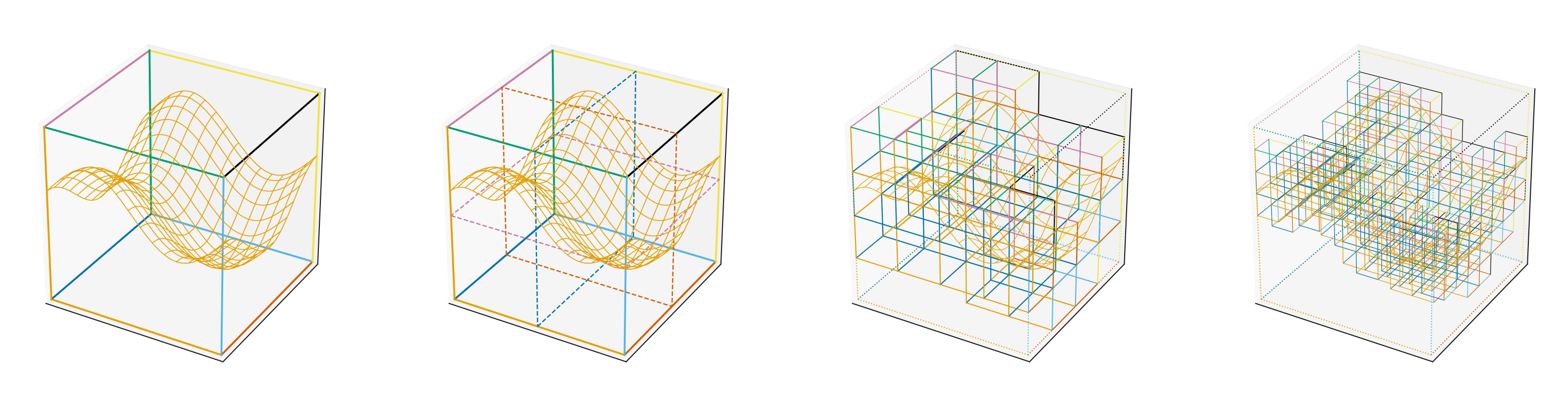}
    \caption{Illustration of the \textsc{Refine} (subdivision) procedure in $3D$. In \cmg, this refinement takes place in $6D$.}
    \label{fig:subdivision}
\end{figure}

Figure~\ref{fig:subdivision} illustrates the central mechanism in \cmg: an adaptive subdivision that progressively filters the configuration space to retain only regions that may lie near the critical manifold.\footnote{We note that the number of boxes maintained depends on the Hausdorff measure of the critical manifold and is $O(1/\delta^k)$ with $k\leq5$~\cite{bilevich2025note}, which is asymptotically better than naively splitting the configuration space into $O(1/\delta^6)$ boxes.}
We next formalize the resulting planner and its refinement search schedule.

\begin{algorithm}[ht]
  \caption{Critical-Manifold Guided RRT}
  \label{alg:cmg-rrt}
  \begin{algorithmic}[1]
    \Require $\eta > 0, \delta > 0$
    \Require $\qstart, \qgoal \in \free$
    \State $V \gets \{\qstart\}$, $E \gets \emptyset$
    \State $\Texpand\gets0, \Trefine\gets0, \factorref\gets 10$
    \State $T \gets 0$
    \While{$T < \LIMIT$}
      \If{$\Trefine \cdot \factorref < \Texpand$}
        \State $\frefine()$
        \State $\fupdate(\Trefine)$
      \EndIf
      \If{$i \bmod \Nrot = 0$}
        \State $\frandomrotate()$
      \EndIf
      \State $s \gets \fsample()$
      \State $\qnear \gets \arg\min_{q \in V} (\distc(q, s))$
      \State $v \gets \fdirection(q_{\mathrm{near}}, s)$
      \State $\qnew \gets \fextend(\qnear, v, \eta)$
      \If{\textsc{IsEdgeValid}($\qnew, \qgoal$)}
        \State \Return \textsc{GetPath}$(V, E, q_{\mathrm{start}}, q_{\mathrm{goal}})$
      \EndIf
      \State $V \gets V \cup \{\qnew\}$
      \State $E \gets E \cup \{(\qnear, \qnew)\}$
      \State \textsc{Update}($\Tsearch$)
      \State $T \gets \Tsearch + \Trefine$
    \EndWhile
    \State \Return \textsc{None}
  \end{algorithmic}
\end{algorithm}

At a high level, \cmg replaces uniform sampling by a subdivision-based sampler~\cite{bilevich2025note} that concentrates samples near the critical manifold.
The idea is to recursively subdivide $\SE(3)$ into shrinking axis-aligned boxes, discard boxes whose configurations are guaranteed not to intersect the critical manifold, and sample only from the remaining boxes (see Figure~\ref{fig:subdivision}). 
A more formal description follows.

\smallskip 

\noindent
\textbf{SDF and distance computation }
In the original \trrrt~\cite{livnat2024tight}, the SDF $F_{M_1}$ was approximated by a neural network, evaluated via GPU forward/backward passes~\cite{davies2020effectiveness,park2019deepsdf}.
In \cmg we instead use a three-dimensional grid with trilinear interpolation~\cite{frisken2000adaptively};
gradients are computed by centered finite differences.
This yields faster query times and allows for efficient parallelization on CPU cores.

\smallskip

We now provide a high-level overview of the algorithmic flow, 
highlighting how refinement, sampling, and tree expansion interact, 
and referring to Algorithms~1-3 for precise pseudo-code.

\smallskip 

\noindent
\textbf{Subdivision module}
The configuration space is six-dimensional with ranges
$x,y,z\in[-1,1]$, $\phi,\psi\in[-\pi,\pi)$, and $\theta\in[-\pi/2,\pi/2]$. The 
\textsc{Subdivision} module
maintains a finite set of axis-aligned boxes $\mathbb{B}$; each box $b\in\mathbb{B}$ is represented by its center and shares a common diagonal length~$D$.
During each call to \textsc{Refine} (Algorithm~\ref{alg:refine}), the boxes are bisected along their longest dimension, and each child box is tested for potential intersection with the contact manifold.
Boxes for which $|\robotsdf(\mathrm{Center}(b))| < D/2$ are retained.
\textsc{Sample}
(Algorithm~\ref{alg:sample}) selects a box uniformly from $\mathbb{B}$ and then selects a configuration uniformly from within that box. 

\begin{algorithm}[t]
\caption{\textsc{Refine}(\textsc{Subdivision})}
\label{alg:refine}
\begin{algorithmic}[1]
\Require Box set $\mathbb{B}$; distance oracle ${\sf dist}(\cdot)$
\State $a \gets \arg\max_{\text{axis}\in\{x,y,z,R_x,R_y,R_z\}}$(side length of boxes in $\mathbb{B}$) \Comment{largest dimension}
\State $\mathbb{B}_{\text{new}} \gets \emptyset$
\For{each $b \in \mathbb{B}$}
  \State Split $b$ into two children $b^{-},b^{+}$ by bisecting along axis $a$
  \For{each $b' \in \{b^{-},b^{+}\}$}
     \State $q_c \gets center(b')$; \quad $D \gets Diagonal(b')$
     \State $d \gets F_{M_2\to M_1} (q_c)$
     \If{$d < D/2$}
        \State $\mathbb{B}_{\text{new}} \gets \mathbb{B}_{\text{new}} \cup \{b'\}$ \Comment{keep potentially near contact}
     \EndIf
  \EndFor
\EndFor
\State $\mathbb{B} \gets \mathbb{B}_{\text{new}}$
\end{algorithmic}
\end{algorithm}

\begin{algorithm}[t]
\caption{\textsc{Sample}(\textsc{Subdivision})}
\label{alg:sample}
\begin{algorithmic}[1]
\Require Nonempty box set $\mathbb{B}$
\State $b \gets$ \textbf{uniformly at random} select a box from $\mathbb{B}$
\State $q \gets$ \textbf{uniformly at random} sample a configuration from $b$ 
       \Comment{independent uniform over each coordinate range of $b$}
\State \Return $q$
\end{algorithmic}
\end{algorithm}

\smallskip 

\noindent
\textbf{Algorithm description}
\cmg begins with several iterations of \textsc{Refine}.
Then, during search, whenever the relative computational budget spent on refinement falls below a threshold, another refinement step is invoked.
This balances refinement cost with search cost: easy problems require few refinements, while hard problems trigger more refinement steps, effectively shrinking boxes toward the contact manifold and yielding highly informative samples.

This refinement-driven sampling accelerates performance on previously solved problems (e.g., the Alpha puzzle) and, crucially, enables solving significantly harder instances such as the Elk puzzle (Fig.~\ref{fig:elk-time-series}).
Zhang~et~al.~\cite{zhang2020cspace_tunnel_discovery} identified Elk as a particularly challenging case due to the absence of identifiable per-piece geometric features and the presence of multiple tunnels involving disjoint contact points from both pieces simultaneously.
Our manifold-guided sampling circumvents these limitations by exploring the C-space directly along its critical manifold, rather than relying on local structural cues.

\section{Probabilistic Completeness}
\label{sec:completeness}

We derive a probabilistic completeness proof for \cmg. 
    We assume that there exists a valid path $\gamma: [0,1]\to \free$, where $\free$ is the set of all configurations for which the robot’s penetration into the obstacles
is at most $\delta>0$. 

We follow our general approach as described in~\cite{kleinbort2018probabilistic,solovey2020revisiting}, but we adapt it to handle rotations explicitly, and we connect it to the boxes that arise in the subdivision process.
We define a sequence of balls covering the path and show that with high probability, \cmg will generate a path that goes through the union of these balls in the order of the sequence. 
We show that this probability converges to one as the number of samples tends to infinity.

Denote by~$L$ the length of the path $\gamma$ under the pseudo-metric $\distc$. 
Let~$m=\frac{5L}{\nu}$, where $\nu = \min(\delta, \eta)$,  and $\eta$ is the {maximal} step size used by the algorithm. Then, define a sequence of ${m}+1$ points $q_0=\gamma(0),\ldots,q_{m}=\gamma(1)$ along $\gamma$, such that the~$\distc$ length of the sub-path between every two consecutive points is $\nu/5$. Therefore, $\distc(q_i,q_{i+1})\leq\nu/5$ for every $0\leq i< m$. Next, we define a set of ${m}+1$ balls of radius $\nu/5$, centered at these points.
We now prove that with high probability, \cmg will generate a path that goes through these balls.

First, we claim that for every configuration $q\in \cspace$, we can sample a random 
rotation $R$ such that with high probability the ball $\B_{\nu}(q')$ centered at $q'= R\cdot q$  is convex.
Denote this probability by $p_{\text{rotate}}$.

\begin{lemma}
    Fix some $q\in\cspace$. Let $R\in\SO(3)$ which is sampled uniformly, i.e., $R\sim \calU(\SO(3))$. Then, with probability at least $1 - \frac{2}{\pi}\nu$, the ball $\B_{\nu}(R\cdot q)$ is convex.
    \label{lemma:prob_eucl}
\end{lemma}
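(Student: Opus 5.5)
Our plan is to reduce convexity of $\B_{\nu}(R\cdot q)$ to a condition on the three rotational coordinates of the center, and then bound the Haar measure of the bad set of rotations.

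\textbf{Step 1: convexity reduces to avoiding the seams.} Write $q'=R\cdot q$ with angular coordinates $(\phi',\theta',\psi')$. The translational part of $\distc$ is Euclidean and never obstructs convexity. For each angular coordinate, $\dsone(x_i,y_i)=|x_i-y_i|$ whenever the interval $(q'_i-\nu,q'_i+\nu)$ does not cross the identification point $\pm\pi$. If this holds for all three angles, then $\B_{\nu}(q')$ coincides with the Euclidean ball of radius $\nu$ in the chart $\bbR^6$, which is convex.

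For the pitch $\theta\in[-\pi/2,\pi/2]$ there is no wrap-around, since the metric is defined on the full circle. The ball may be truncated by the chart boundary, but the intersection of a Euclidean ball with a box is still convex. Hence the only bad events are
\[
|\phi'|>\pi-\nu \quad\text{or}\quad |\psi'|>\pi-\nu .
\]

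\textbf{Step 2: distribution of $(\phi',\psi')$.} Since $R\sim\calU(\SO(3))$ and the Haar measure is left-invariant, the rotational part of $R\cdot q$ is itself Haar-distributed, whatever $q$ is. By the rpy volume element $\frac{1}{8\pi^2}\cos\theta\,d\phi\,d\theta\,d\psi$, the density factorizes. Therefore $\phi'$ and $\psi'$ are each uniform on $[-\pi,\pi)$, and $\theta'$ has density $\tfrac12\cos\theta$.

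\textbf{Step 3: union bound.} For each of $\phi'$ and $\psi'$, the bad set is an arc of length $2\nu$ around $\pm\pi$. Its probability is therefore $\frac{2\nu}{2\pi}=\frac{\nu}{\pi}$. A union bound over the two coordinates gives a failure probability of at most $\frac{2\nu}{\pi}$, which is the claimed bound $1-\frac{2}{\pi}\nu$.

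\textbf{Main obstacle.} The delicate point is Step 1, namely justifying that the pitch coordinate and the translational box boundary cannot destroy convexity. We will argue this directly. When no coordinate wraps, the ball equals the Euclidean ball intersected with the chart domain. The chart domain $\bbR^3\times[-\pi,\pi)\times[-\pi/2,\pi/2]\times[-\pi,\pi)$ is convex, so the intersection is convex.

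We would also check the boundary convention $\nu<\pi$ so that the bad arcs do not overlap. This is harmless, since otherwise the bound $1-\frac{2}{\pi}\nu$ is trivial.
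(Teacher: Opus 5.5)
Your proposal is correct and follows essentially the same route as the paper: Haar invariance makes the rotational part of the center Haar-distributed, so its roll and yaw are uniform on $[-\pi,\pi)$; the ball agrees with a Euclidean ball whenever both angles lie in $(-\pi+\nu,\pi-\nu)$; and a union bound over the two $2\nu$-wide seam strips gives $\frac{2}{\pi}\nu$. Your explicit handling of the pitch (where $\dsone$ never wraps) and of truncation by the convex chart domain tightens the paper's looser claim that the ball ``coincides with the Euclidean ball''; one small correction is that $R\mapsto R\cdot\pi_{\SO(3)}(q)$ is a right translation, so the property you need is right-invariance, which holds because Haar measure on the compact group $\SO(3)$ is bi-invariant.
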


 \begin{proof}
   Throughout the proof, with a slight abuse of notation, $R$ denotes both a rotation and the corresponding rigid-body transformation, by embedding $R$ in $\SE(3)$ with zero translation.
    Let $\pi_{\SO(3)}:\SE(3)\to\SO(3)$ be the projection from $\SE(3)$ to its orientation subgroup $\SO(3)$.
    We first notice that sampling $R$ uniformly from $
    \SO(3)$ is equally likely to sampling $R'=R\cdot \pi_{\SO(3)}(q^{-1})$.
    We also notice that the ball's $\B_\nu$ convexity does not depend on the translation (which is Euclidean) but only on the orientation.
    Hence, it suffices to
    prove that the ball $\B_\nu(R)$ is convex.

    Let $\pi_{\phi}, \pi_{\psi}:\SO(3) \to [-\pi, \pi)$ be the projection of the orientation into the roll and yaw, respectively. Note that when both angles have $\pi_\phi(R),\pi_\psi(R) \in (-\pi + \nu, \pi -\nu)$, by definition of $\distc$, for any two points $x,y \in \B_\nu(R)$, the pseudo-metric $\distc$ coincides with the Euclidean 
    metric on $\bbR^6$. Thus, the ball $\B_\nu(R)$ coincides with the Euclidean ball, and as such, it is convex.

    Using the volume element defined in Section~\ref{ssec:so3}, we can bound the probability for either 
    $\pi_\phi(R),\pi_\psi(R) \not\in (-\pi + \nu, \pi -\nu)$:

    \begin{align}
        \bbP[&\mathrm{either}\ \pi_\phi(R),\pi_\psi(R) \not\in (-\pi + \nu, \pi -\nu)
        ] \nonumber \\
        &\leq \bbP\left[
            \pi_\phi(R) \not\in (-\pi + \nu, \pi -\nu)
        \right] + 
        \bbP\left[
            \pi_\psi(R) \not\in (-\pi + \nu, \pi -\nu)
        \right] \nonumber \\
        &= \bbP[
            \pi_\phi(R) \in [-\pi, -\pi +\nu]
        ] + 
        \bbP\left[
            \pi_\phi(R) \in [\pi-\nu, \pi)
        \right] \nonumber \\
         &\ \ \ \ \ + \bbP[
            \pi_\psi(R) \in [-\pi, -\pi +\nu]
        ] + 
        \bbP\left[
            \pi_\psi(R) \in [\pi-\nu, \pi)
        \right] \nonumber \\
        &= 4\cdot\int_{-\pi}^{-\pi+\nu}\int_{-\pi/2}^{\pi/2}\int_{-\pi}^\pi \frac{1}{8\pi^2}\cos\theta \ d\phi\ d\theta\ d\psi = 4\cdot \frac{1}{8\pi^2}\cdot 2 \cdot 2\pi \cdot \nu = \frac{2}{\pi}\nu \nonumber\;.
    \end{align}
    Notice that due to symmetry, the four probability computations amount to the same triple integral.
    Hence, the probability that the ball $\B_\nu(R)$ is Euclidean (and thus convex) is at least $1 - \frac{2}{\pi}\nu$. 
 \end{proof}

Next, we prove that if \cmg has reached a certain ball, with probability greater than zero, it will reach the next consecutive ball.

\begin{lemma}
Suppose that \cmg  has reached $\B_{\nu/5}(q_i)$, that is, $T$ contains a vertex $q'_i$ such that 	$q'_{i}\in \B_{\nu/5}(q_i)$. 
With high probability \cmg will reach $\B_{\nu/5}(q_{i+1})$.

\label{lem:almost_geometric_step}
\end{lemma}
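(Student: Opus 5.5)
We follow the standard single-step argument from the RRT completeness proof of~\cite{kleinbort2018probabilistic}, adapted to the pseudo-metric $\distc$, to random rotations, and to the subdivision sampler. The goal is a lower bound $p>0$ on the probability that one iteration of \cmg adds a vertex inside $\B_{\nu/5}(q_{i+1})$, given a vertex $q'_i\in\B_{\nu/5}(q_i)$. The bound must be uniform in $i$ and in the current tree. Independence across iterations then gives failure probability at most $(1-p)^k\to 0$ after $k$ further iterations.

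\textbf{Step 1: the target ball is convex and lies in $\free$.} With probability at least $p_{\text{rotate}}\ge 1-\frac{2}{\pi}\nu$ (Lemma~\ref{lemma:prob_eucl}), the last \frandomrotate{} call made $\B_{\nu}(q_i)$ Euclidean, hence convex. Every iteration satisfies $i \bmod \Nrot = 0$ infinitely often, so such a good rotation occurs with probability tending to one. Since $\distc(q_i,q_{i+1})\le\nu/5$, the ball $\B_{\nu/5}(q_{i+1})$ lies in $\B_{2\nu/5}(q_i)\subset\B_\nu(q_i)$.

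For the ball to lie in $\free$ we need clearance, and we use the standard assumption that $\gamma$ has clearance at least $\nu$ in the $\delta$-free space. Granting this, $\B_\nu(q_i)\subset\free$. The straight segment from $q'_i$ to any point of $\B_{\nu/5}(q_{i+1})$ is then both inside $\B_\nu(q_i)$ and collision free, and its length is at most $3\nu/5\le\eta$. So \fextend{} with step $\eta$ reaches the sample directly, and the sliding/rolling step of \trrrt{} is never triggered.

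\textbf{Step 2: nearest neighbor and sample mass.} We follow the classical argument and use a sub-ball $\B_{\nu/5}$ chosen around a point near $q_{i+1}$. If the sample $s$ falls in $\B_{\nu/5}(q_{i+1})$, then $\distc(q'_i,s)\le 2\nu/5+\nu/5$. The returned $\qnear$ is at least as close to $s$ as $q'_i$, so the triangle inequality places $\qnear\in\B_{\nu}(q_i)$ as well. Convexity then keeps the extension segment inside $\B_\nu(q_i)\subset\free$. When the segment is truncated at length $\eta$, we need the endpoint to land in $\B_{\nu/5}(q_{i+1})$. We handle this exactly as in~\cite{kleinbort2018probabilistic}: we shrink the sample target to a ball of radius $\nu/25$ (or a constant fraction of it) so the endpoint estimate closes. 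This yields the factor $5$ in the definition of $m$.

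\textbf{Step 3 (the main obstacle): the sampler puts positive mass on this ball.} \fsample{} draws only from retained boxes, so we must show that a retained box meets the target ball in positive volume. Because $\gamma$ lies in $\free$ and passes near contact (in tight instances, or else early refinements still cover all of $\cspace$), the points $q\in\B_{\nu/5}(q_{i+1})$ with $|\robotsdf(q)|$ small should survive the test $\robotsdf(\mathrm{Center}(b))<D/2$. We would argue that $\robotsdf$ is $1$-Lipschitz with respect to $\distc$ up to a constant, given that the bodies fit in the unit sphere. Hence any box containing a point at distance less than $D/2$ from the critical set is retained.

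We then need two facts. First, there are finitely many boxes, bounded by $O(1/D^6)$. Second, a box of diagonal $D\le\nu/25$ intersecting the ball has a constant fraction of its volume inside it. Together these give $p\ge c(\nu,D)\cdot p_{\text{rotate}}>0$. The weak point is that this requires the path segment near $q_{i+1}$ to lie within the retained region. We expect to need either the assumption that refinement stops at a diagonal $D$ comparable to $\nu$, or that the path lies in the neighborhood of the critical manifold kept by the boxes.
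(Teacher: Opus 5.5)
Your Steps 1--2 match the paper's geometric argument. The chain $\distc(\qnear,q_i)\le\distc(q'_i,q_i)+2\bigl(\distc(q_{i+1},s)+\distc(q_{i+1},q_i)\bigr)\le\nu$ places $\qnear$ in $\B_\nu(q_i)$. That ball is convex with probability $p_{\text{rotate}}$, and $\distc(\qnear,s)\le\distc(q'_i,s)\le 3\nu/5<\eta$. This last bound already gives $\qnew=s$. So the truncation worry and the shrinking to radius $\nu/25$ in Step 2 are unnecessary, and the factor $5$ in $m$ comes from the chain above, not from any shrinking.

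\textbf{The gap: Step 3 fails without an extra hypothesis.} You leave this step conditional, and as stated it does not hold. Refinement is re-triggered whenever $\Trefine\cdot\factorref<\Texpand$, so $D\to 0$ as the search runs. \frefine{} keeps only boxes whose center has $\robotsdf$ below $D/2$. Suppose $q_{i+1}$ lies on a wide-clearance stretch of $\gamma$, say $\robotsdf\ge\varepsilon>0$ on $\B_\nu(q_{i+1})$. Once $D<\min(\nu,2\varepsilon)$, every box meeting $\B_{\nu/5}(q_{i+1})$ has its center in $\B_\nu(q_{i+1})$ and is discarded, so $p_{\text{sample}}=0$ and the argument gives nothing for that step.

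\textbf{How the paper handles it, and what you would need.} The paper gets past this point only by asserting that once $D\le\nu/5$ each such ball contains a box of $\mathbb{B}$, giving $p_{\text{sample}}\ge|b|/|\mathbb{B}|$. That implicitly adopts your second option, that $\gamma$ runs inside the retained region. To turn Step 3 into a proof, state that hypothesis explicitly (e.g.\ $\robotsdf(q_{i+1})=0$) and repair three points.

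First, retention. With the fixed threshold $D/2$, the box containing a zero of $\robotsdf$ is guaranteed to survive only if $\robotsdf$ is $1$-Lipschitz in the Euclidean metric of $\mathbb{R}^6$, in which $D$ is measured. A Lipschitz bound ``up to a constant'' does not give this. Neither does the box merely containing a point within $D/2$ of the critical set.

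Second, containment. A box meeting the ball need not have a constant fraction of its volume inside it. Use instead the retained box containing $q_{i+1}$. Since $\distc$ is dominated by the Euclidean distance, this box lies entirely in $\B_{\nu/5}(q_{i+1})$ once $D<\nu/5$. It is chosen with probability one over the number of boxes.

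Third, uniformity. A bound $c(\nu,D)$ obtained from an $O(1/D^6)$ box count tends to $0$ as refinement continues, so it is not the fixed $p$ that Theorem~\ref{thm_main} needs. You must either freeze refinement at some $D_0<\nu/5$, or compare the number of retained boxes inside the target ball with the total number; near a $k$-dimensional critical set both scale roughly like $D^{-k}$. The paper's $|b|/|\mathbb{B}|$ carries the same hidden dependence on the current box set.
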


\begin{proof}

\begin{figure}[h]
    \centering
    \includegraphics[width=0.4\linewidth]{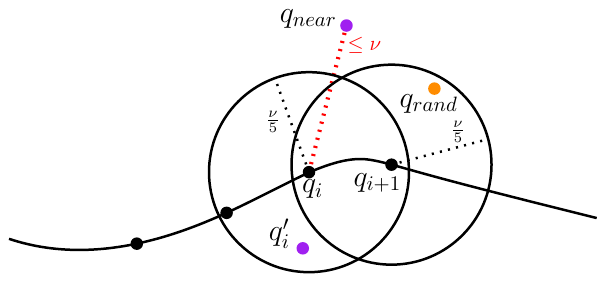}
    \caption{Illustration of Lemma~\ref{lem:almost_geometric_step}. Note that the balls are not necessarily convex. However, with probability $p_{\text{rotate}}$, we will choose a random rotation such that, in each transition step, the relevant balls are convex.}
    \label{fig:lemma_step}
\end{figure}
Suppose that $q_{\text{rand}}$ is drawn such that $q_{\text{rand}}\in \B_{\nu/5}(q_{i+1})$.
Denote by $q_{\text{near}}$ the nearest neighbor of $q_{\text{rand}}$ among the \cmg vertices.
See Fig.~\ref{fig:lemma_step} for an illustration.
Then, from the definition of $q_{\text{near}}$, it follows that $\distc( q_{\text{near}}, q_{\text{rand}}) \leq \distc(  q'_i , q_{\text{rand}})$.

From 
$\distc( q_{\text{near}}, q_{\text{rand}}) \leq \distc(  q'_i , q_{\text{rand}})$
and the triangle inequality, we have:
\begin{align*}
	\distc(q_{\text{near}}, q_i) &\leq 
	\distc(q_{\text{near}}, q_{\text{rand}})+ \distc(  q_{\text{rand}}, q_i)
	\\
	&\leq 
	\distc (q'_i,q_{\text{rand}}) + \distc(q_{\text{rand}}, q_i) .
	\end{align*}
	From the triangle inequality, we have that 
	\[\distc( q_{\text{rand}} ,q_i)\leq \distc(  q_{\text{rand}}, q_{i+1} ) + \distc(  q_{i+1}, q_{i}),\]
    \[\distc (  q_{\text{rand}},q'_i) \leq \distc( q_{\text{rand}} ,q_i) + \distc(  q_i,q'_{i})).
	\]
	Therefore, and since $\distc(\cdot, \cdot)$ is symmetric:
	\begin{align*}
	\distc(  q_{\text{near}} ,q_i ) &\leq 
	\distc (q'_i ,q_i ) + 2\cdot(\distc( q_{i+1} ,q_{\text{rand}}) + 
	\distc(q_{i+1},q_{i})) \leq 5\frac{\nu}{5} = \nu.
	\end{align*}

    Note that by applying a random rotation $R$, we split and reform the ball $\B_\nu(q_i)$. 
    From Lemma~\ref{lemma:prob_eucl}, the probability that the random rotation chosen will cause $\B_\nu(q_i)$ to be convex is $p_{\text{rotate}}$. 

    If $\B_\nu(q_i)$ is convex then since 
     $q_{\text{near}}\in\B_\nu(q_{i})\subseteq\free$ and since $q_{\text{rand}}\in\B_\nu(q_{i})\subseteq\free$ we obtain $\overline{q_{\text{near}}q_{\text{rand}}}\subseteq \B_\nu(q_i)\subseteq\free$. 
    
    Also, the distance between
    $q_{\text{near}}$ and $q_{\text{rand}}$ is at most $\eta$ since: 
	$\distc( q_{\text{rand}} ,q_{\text{near}}) \leq \distc( q_{\text{rand}} , q'_i )
\leq \distc( q_i',q_i) +  \distc (q_i ,q_{i+1} )  + \distc ( q_{i+1}, q_{\text{rand}}) \leq 3\cdot\frac{\nu}{5} < \nu \leq \eta.$ 
The fact that $\distc( q_{\text{near}} ,q_{\text{rand}}) \leq \eta$, means that $q_{\text{new}} = q_{\text{rand}}$.

Finally, we bound the probability to sample $q_{\text{rand}}$ 
such that it lies in $\B_{\nu/5}(q_{i+1})$.
After a finite number of refinement steps, the diagonal $D$ of a box will be at most $\nu/5$. When $D\leq \nu/5$, and assuming that $\B_\nu(q_i)$ is convex, then each such ball will contain at least one box $b\in \mathbb{B}$.
Since we sample uniformly from the boxes in the box set $\mathbb{B}$, the probability 
$p_{\text{sample}}$
to sample a point within $\B_{\nu/5}(q_{i+1})$ given that $\B_\nu(q_i)$ is convex is at least $ |b|/|\mathbb{B}|$, where $|b|$ is the Lebesgue measure\footnote{Note that we use the Lebesgue measure here since we sample points in $\bbR^6$.} of a box $b\in \mathbb{B}$ and $|\mathbb{B}|$ is the Lebesgue measure of the box set $\mathbb{B}$.

Thus, the probability $p$ that the straight line segment between  a random sample $q_{\text{rand}}$ and its nearest neighbor $q_{\text{near}}$ in $T$ lies entirely in $\free$ is
$p =p_{\text{rotate}}\cdot p_{sample} >0$.

\end{proof}
We now prove our main theorem.
\begin{theorem}
	The probability that \cmg fails to reach $q_{\text{goal}}$ from $q_{\text{init}}$ after $k$ iterations
	is at most $ae^{-bk}$, for some constants $a,b \in \bbR_{>0}$.
	\label{thm_main}
\end{theorem}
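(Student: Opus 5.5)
The plan follows the standard RRT completeness argument of~\cite{kleinbort2018probabilistic}: reduce to counting Bernoulli successes and apply a Chernoff bound. By Lemma~\ref{lem:almost_geometric_step}, once the tree contains a vertex in $\B_{\nu/5}(q_i)$, each subsequent iteration independently has probability at least $p = p_{\text{rotate}}\cdot p_{\text{sample}} > 0$ of adding a vertex in $\B_{\nu/5}(q_{i+1})$. This bound is uniform in $i$ and in the current tree, since the nearest-neighbour argument only uses the existence of $q'_i$. Moreover, $\qstart = q_0$ already lies in $\B_{\nu/5}(q_0)$. Reaching $\B_{\nu/5}(q_m)$, which contains $\qgoal$ within distance $\nu/5 \le \eta$ and inside a convex ball in $\free$, lets the final \textsc{IsEdgeValid} check succeed. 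So \cmg succeeds as soon as $m$ transitions have occurred, possibly $m+1$ to account for the final connection.

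First I need to make the constants uniform. Only finitely many refinements occur before the box diagonal satisfies $D \le \nu/5$, since each refinement halves one side. The schedule $\Trefine\cdot\factorref<\Texpand$ guarantees that refinement continues to be invoked as search time grows. Hence there is a constant $k_0$ after which $p_{\text{sample}} \ge |b|/|\mathbb{B}|$ holds. Here $|\mathbb{B}|\le$ the volume of $\cspace$ and $|b|$ is fixed at each level; I will argue that $|b|/|\mathbb{B}|$ does not decrease under further refinement, or simply fix the refinement level reached, so that $p$ is a positive constant independent of $k$. The rotation is refreshed every $\Nrot$ iterations, so I treat blocks of $\Nrot$ iterations as trials. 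In each block, with probability at least $p_{\text{rotate}}$ (by Lemma~\ref{lemma:prob_eucl}, with $\nu < \pi/2$) the relevant ball is convex, and then a sample lands in the target ball with probability at least $p_{\text{sample}}$. This yields a per-block success probability $p' > 0$ that holds conditionally on the entire past.

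Then, for $k$ iterations past $k_0$, the number of successful transitions stochastically dominates a binomial $X\sim\mathrm{Bin}(n,p')$ with $n=\lfloor (k-k_0)/\Nrot\rfloor$. I would justify this domination with a coupling or martingale argument, since the trials are only conditionally lower-bounded and not truly independent. Failure implies $X<m$. For $n$ large enough that $np' > 2m$, Chernoff gives $\bbP[X<m]\le e^{-np'/8}$. Absorbing $k_0$, $\Nrot$ and the finitely many small-$k$ cases into the constant gives $\bbP[\text{failure after }k]\le a e^{-bk}$, for example with $b=p'/(8\Nrot)$ and $a$ large enough.

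The main obstacle is the uniformity of $p_{\text{sample}}$. The refinement process is interleaved with search and driven by timing, and the box set $\mathbb{B}$ changes over time. I must show that a box lying inside $\B_{\nu/5}(q_{i+1})$ is retained: it meets the critical manifold or the $\delta$-penetration region, so the filter $|F(\text{center})|<D/2$ keeps it given Lipschitzness of $\robotsdf$ in $\distc$. I must also show that $|b|/|\mathbb{B}|$ stays bounded below. If it does not, I would cap the analysis at the first level with $D\le\nu/5$ and treat later levels by the same argument with the corresponding smaller but still positive constant. I expect the bound to be stated for a fixed problem instance, so that constant is acceptable.
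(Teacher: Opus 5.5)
Your approach is the paper's: reduce to Bernoulli trials with the per-step probability $p=p_{\text{rotate}}\cdot p_{\text{sample}}$ of Lemma~\ref{lem:almost_geometric_step}, then bound a binomial lower tail. The paper bounds $\sum_{i\le m}\binom{k}{i}p^i(1-p)^{k-i}$ directly by $\binom{k}{m}(m+1)e^{-pk}$ instead of using Chernoff, which is immaterial. Your burn-in $k_0$, blocks of $\Nrot$ iterations and stochastic-domination coupling are bookkeeping the paper omits.

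The gap is the step you flag yourself: $p_{\text{sample}}$ bounded away from $0$ uniformly in $k$. None of your three suggestions supplies it.
\begin{itemize}
\item \emph{Monotonicity of $|b|/|\mathbb{B}|$.} All boxes have the same size, so $|b|/|\mathbb{B}|=1/N$, with $N$ the number of retained boxes. The schedule refines forever, as you note, so $N$ grows without bound (roughly like $D^{-5}$ for a five-dimensional critical manifold). This ratio therefore tends to $0$; it is not non-decreasing. The paper's bound $p_{\text{sample}}\ge|b|/|\mathbb{B}|$ has the same defect.
\item \emph{Capping at one level.} For the same reason you cannot cap the analysis at the first level with $D\le\nu/5$. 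Level-dependent constants that tend to $0$ give no fixed $b$.
\item \emph{Retention of boxes in the target ball.} A box inside $\B_{\nu/5}(q_{i+1})$ need not meet the critical manifold. Every point of a retained box has $|\robotsdf|=O(D)$ by Lipschitzness. So if $\robotsdf\ge c>0$ on that ball, as on any stretch of $\gamma$ through wide free space (the Elk solution has several), then once $D$ is small relative to $c$ no retained box even meets the ball, and $p_{\text{sample}}=0$ there. Boxes lying well inside the penetration region are likewise discarded by $|\robotsdf(\mathrm{Center}(b))|<D/2$, so meeting that region does not help.
\end{itemize}

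The paper does not close this either. It asserts that for $D\le\nu/5$ each ball contains a retained box, then treats $p$ as a constant, so you cannot recover the missing step from it. To make your argument rigorous, lower-bound the fraction of retained boxes lying inside the target ball, not the mass of a single box. That needs extra hypotheses, for example:
\begin{itemize}
\item every ball $\B_{\nu/10}(q_i)$ meets the critical manifold in a set of positive five-dimensional measure;
\item the filter never discards a box meeting the manifold;
\item $N=O(D^{-5})$, as in the box-count bound quoted in Section~\ref{sec:CMG-RRT}.
\end{itemize}
Then the number of retained boxes inside $\B_{\nu/5}(q_i)$ is $\Omega(D^{-5})$, and the fraction is bounded below independently of $D$. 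Paths with clearance-rich stretches fall outside these hypotheses. For them you would need either a separate argument for how the tree crosses such stretches without samples there, or a variant of \textsc{Sample} that draws a fixed fraction of its samples uniformly from $\cspace$.
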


\begin{proof}

Assume that $\B_{\nu/5}(q_{i})$ already contains a \cmg vertex.  From Lemma~\ref{lem:almost_geometric_step}, with probability $p>0$ in the next iteration a \cmg vertex will be added to $\B_{\nu/5}(q_{i+1})$. 

Reaching $q_{\text{goal}}$ from $q_{\text{init}}$ with \cmg requires
repeating this step $m$ times, transitioning from $q_i$ to $q_{i+1}$ for $0\leq i < m$. 
With probability $p_{\text{rotate}} \geq p$ 
the $m$th~ball $\B_{\nu/5}(q_{\text{goal}})$ is convex, and then any line segment from a point in the ball to its center $\qgoal$ is in $\free$.
Therefore, this process can be described as $k$
Bernoulli trials with success probability $p$, where we would like to bound the probability to obtain $m+1$ successful outcomes ($m$ for reaching $\B_{\nu/5}(\qgoal)$ from $\B_{\nu/5}(\qstart)$ and another successful outcome in choosing a random rotation such that $\B_{\nu/5}(\qgoal)$ is convex and thus any straight line of points in it is in $\free$, as described before). 
By defining success to be $m+1$ successful outcomes, we obtain an upper bound on the probability of failure, as 
the process may end after less than $m+1$ successful outcomes. 
	
As in the analysis of~\cite{kleinbort2018probabilistic}, we can bound the  probability of failure. That is, the probability that the process does not reach state $(m+1)$,
	after $k$ steps. Let $X_k$ denote the number of successes in $k$ trials, then

    \begin{align}
		\Pr[&X_k < (m+1)] = \sum_{i=0}^{m}{\binom{k}{i}p^i(1-p)^{k-i}}\nonumber\\
		&\leq \sum_{i=0}^{m}{\binom{k}{m}p^i(1-p)^{k-i}}\nonumber \leq \binom{k}{m}\sum_{i=0}^{m}{(1-p)^{k}} \nonumber \\
		&\leq \binom{k}{m}\sum_{i=0}^{m}{(e^{-p})^{k}}  
		= \binom{k}{m}(m+1){e^{-pk}} \nonumber \\
		&= \frac{\prod_{i=k-m-1}^k{i}}{m!}{(m+1)e^{-pk}}  
		\leq \frac{1}{m!}k^{m+1}(m+1){e^{-pk}}\nonumber,  
	\end{align}
    
	{where the transitions rely on (i) $m\ll k$, (ii)  $p<\frac{1}{2}$,
	and (iii)  $(1-p)\leq e^{-p}$.}

	As $p,m$ are fixed and independent of $k$, the expression $\frac{1}{m!}k^{m+1} (m+1){e^{-pk}}$ decays to zero exponentially with $k$.	Therefore, \cmg is probabilistically complete.
\end{proof}

\section{Experiments and Results}
\label{sec:results}
In this section, we present comparisons of \cmg with state-of-the-art benchmarks and results. Specifically, we refer to a subset of $16$ instances from the benchmark presented in~\cite{tian2022assemble_them_all} which are challenging because they require a non-trivial combination of translation and rotation. In addition, we include the Elk puzzle, mentioned in~\cite{livnat2024tight,zhang2020cspace_tunnel_discovery} which is notably difficult.

\subsection{Implementation Details}
Our open-source software is written in Python and is available online (see project page, footnote 2). The code and all evaluations were run on a Linux machine with an Intel Core i7-12700 CPU. 
The tree exploration is parallelized among 16 cores.
All models are scaled beforehand such that they fit inside the three-dimensional unit sphere $\bbS^2\subset\bbR^3$.

\smallskip 

\noindent
\textbf{Collision Detection }
To perform collision detection and to find contact points, we do the following. At the beginning of the algorithm, we sample $10,000$ points on the boundary of the robot $M_2$. Then, for a configuration $q\in \SE(3)$, we transform those sampled points by $q$ and evaluate the SDF of $M_1$. That SDF is implemented as a $240^3$ grid. The SDF construction takes  $7.4$ seconds on average and at most $12.9$ seconds for all tested models. We evaluate grid points in parallel using OpenMP~\cite{dagum1998openmp}, and perform distance queries with CGAL's Axis Aligned Bounding Box (AABB) tree~\cite{cgal:atw-aabb-26a}. Points that have an absolute signed distance less than a threshold $\delta = 0.005$ are considered contact points. Points with signed distance less than $-\delta$ are considered in penetration, and we report a collision. 

Identifying too many contact points can artificially eliminate all the robot's degrees of freedom, particularly when nearby points induce nearly identical gradients. Therefore, after identifying all potential contact points, we cluster them by proximity of their gradients using the K-means algorithm~\cite{ahmed2020k,macqueen1967some} 
and achieve representative contact points for calculating of the tangent or retract direction. 
 
\subsection{Evaluation Against Baseline Methods}

We compare our results against four representative sampling-based methods: PRM$^*$~\cite{karaman2011sampling}, BIT$^*$~\cite{gammell2020batch}, BKPIECE~\cite{csucan2009kinodynamic}, and RRTConnect~\cite{kuffner2000rrt}, which are all implemented in the OMPL library~\cite{sucan2012the-open-motion-planning-library}, as well as four methods dedicated to tight assemblies and puzzles: Tian et al~\cite{tian2022assemble_them_all}, BK-RRT~\cite{zickler2009efficient}, Zhang et al.~\cite{zhang2020cspace_tunnel_discovery}, and the original TR-RRT~\cite{livnat2024tight}.
We evaluated our method on the dataset provided by Tian~et~al.~\cite{tian2022assemble_them_all} (see Figure~\ref{fig:dataset}). 
Specifically, we deal with their \emph{rotational assemblies}, which are problems that require a combination of simultaneous translation and rotation to disassemble the parts. 
This dataset is comprised of overall $24$ instances, divided into three categories (of eight instances each): \texttt{screws}, \texttt{puzzles}, and \texttt{others}. Since our method deals with cases where there is only a discrete number of contact points at all times, we have not evaluated it on the screws category, which have contact surfaces (i.e., the set of all contact points in the workspace is a $2$-manifold). 
Finally, we have tested on the Elk puzzle, which was first presented as an unsolved benchmark in~\cite{zhang2020cspace_tunnel_discovery}, and later partially solved in~\cite{livnat2024tight}, but which, to the best of our knowledge, has never been completely solved automatically. Our method, \cmg, solves this puzzle with $100\%$ success rate and with an average time of $156$ minutes.

\begin{table}[!t]
\caption{Results on 16 rotational assemblies from~\cite{tian2022assemble_them_all} under \emph{puzzle} and \emph{other}.}
\label{table:success_rate}
\centering
\renewcommand{\arraystretch}{1.12}
\setlength{\tabcolsep}{5pt}

\begin{threeparttable}
\begin{tabular*}{\columnwidth}{@{\extracolsep{\fill}}
    l
    S[table-format=3.1]
    S[table-format=2.1]
@{}}
\toprule
Algorithm & {Success (\%) $\uparrow$} & {AST (min) $\downarrow$} \\
\midrule
PRM*~\cite{karaman2011sampling}            & 0   & {-}\\
BIT*~\cite{gammell2020batch}            & 0   & {-} \\
BKPIECE~\cite{csucan2009kinodynamic}         & 6.9   & 9.6 \\
RRTConnect~\cite{kuffner2000rrt}      & 12.5  & 5.2  \\

\midrule
BK-RRT~\cite{zickler2009efficient}        & 75.0 & 10.9 \\
Tian et al.~\cite{tian2022assemble_them_all}     & 88.0  & 8.7  \\
Zhang et al.\tnote{$\dagger$}~\cite{zhang2020cspace_tunnel_discovery}
                & 100.0 & 37.6  \\
TR-RRT~\cite{livnat2024tight}          & 100.0 & 61.9 \\
                
\midrule
\cmg{}         & \textbf{100.0} & \textbf{3.0}  \\
\bottomrule
\end{tabular*}
\begin{tablenotes}
\footnotesize
\item[$\dagger$] Results reported from the original paper on \emph{puzzles} only, running on a HTCondor cluster. This method, by design, can be applied only for puzzles.
\end{tablenotes}
\end{threeparttable}
\end{table}

We ran each method on each instance $10$ times, with a timeout of $4$ hours.
If, before that timeout, a valid path of motion was found, then we consider this a success.

In Table~\ref{table:success_rate}, we show the success rates and average success time (AST) of the methods. Running time includes the SDF grid construction.
\cmg's success rate outperforms all but the original TR-RRT, consistently solving all evaluated instances, and runs faster than all tested methods, when considering succefull instances.

\section{Discussion}
\label{sec:discussion}
This work targets a fundamental challenge of sampling-based planners in tight assemblies: although feasible motions often concentrate near contact, uniform sampling in $\SE(3)$ spends most effort in regions that do not contribute to progress. \cmg addresses this mismatch by adaptively restricting sampling to a shrinking set of boxes that are provably near contact according to an SDF-based oracle, while preserving probabilistic completeness. Empirically, this bias toward the critical manifold substantially improves performance on challenging rotational assembly benchmarks, solving instances that prior methods have not, including the Elk puzzle.

\begin{figure}[h]    
    \centering     \includegraphics[width=1.0\linewidth]{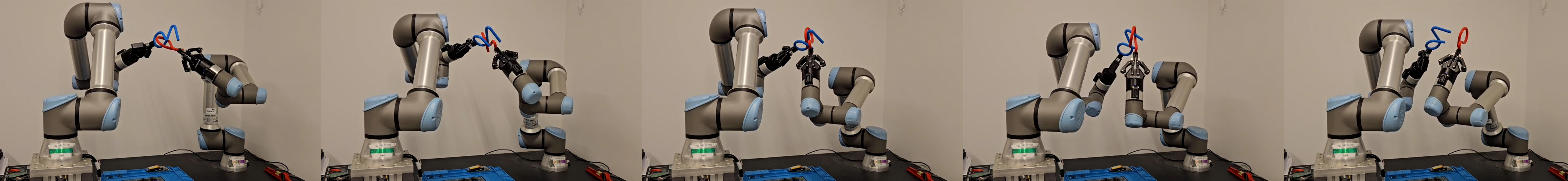}
    \caption{Execution of a \cmg planned trajectory for the \texttt{az} puzzle using two UR5e robotic arms. Arms trajectories are based on a separate continuous IK derived from the exact \cmg trajectory for the free-flying objects.}    
    \label{fig:ur5x2-az}
\end{figure}

Figure~\ref{fig:ur5x2-az} provides a complementary, qualitative validation of the trajectories produced by \cmg. While \cmg plans for free-flying rigid bodies in $\SE(3)$, we executed one planned solution using two UR5e robotic arms by converting the free-flying pose trajectory to continuous joint-space motion using a separate IK-based pipeline~\cite{livnat2025full}. The resulting motion illustrates that the planned path is smooth and physically plausible and that the contact-rich portion of the motion can be tracked without requiring manual modification of the underlying trajectory. We emphasize that this experiment is intended as a qualitative demonstration of executability rather than a full manipulation-planning evaluation.

\cmg has a couple of noticeable limitations. First, our current implementation assumes a grid-based SDF with finite-difference gradients; while this yields predictable query times, the approximation quality depends on grid resolution and scaling. 
Second, our method is designed for rigid-body assemblies with point contacts; extending it to settings with sustained surface contact (e.g., screw-like motions) may require additional progress.

A natural next step is to extend \cmg from two-part to multi-part assemblies, where progress requires simultaneous and mutually consistent contacts among three or more parts. We expect the subdivision-and-prune approach to be particularly valuable in this setting, as coordinated contacts implicitly restrict feasible motion to a small, structured subset of the full configuration space, allowing sampling to focus on regions where progress is possible despite the increased dimensionality.

%
%
\bibliographystyle{IEEEtran}
\bibliography{bibliography}

\end{document}